\newcommand{\xy}[1]{\textcolor{black}{#1}}
\newcommand{\zm}[1]{\textcolor{black}{#1}}
\begin{document}

\title{Extracting Weighted Finite Automata from Recurrent Neural Networks for Natural Languages}

\titlerunning{Extracting WFA from RNNs for NL}
\author{Zeming Wei \and Xiyue Zhang \and Meng Sun\inst{(}\Envelope\inst{)}}
\authorrunning{Z. Wei et al.}
%
\institute{Peking University, Beijing 100871, China\\
\email{weizeming@stu.pku.edu.cn, \{zhangxiyue,sunm\}@pku.edu.cn}}

\maketitle              
\begin{abstract}
    Recurrent Neural Networks (RNNs) have achieved tremendous success in sequential data processing.
    However, it is quite challenging to interpret and verify RNNs' behaviors directly.
    To this end, 
    many efforts have been made
    to extract 
    finite automata from RNNs.
    Existing approaches such as exact learning are effective in extracting finite-state models to characterize the state dynamics of RNNs for formal languages, but are limited in the scalability to process natural languages. 
    Compositional approaches that are scablable to natural languages fall short in extraction precision.
    In this paper, 
    we identify the transition sparsity problem that heavily impacts the extraction precision.
    To address this problem, we propose a transition rule extraction approach, which is scalable to natural language processing models and effective in improving extraction precision. 
    Specifically, 
    we propose an empirical method to 
    complement
    the missing 
    rules
    in the transition diagram.
    In addition, we further adjust the transition matrices to
    enhance the context-aware ability of the extracted weighted finite automaton (WFA).
    Finally, we 
    propose two data augmentation tactics to track more dynamic behaviors of the target RNN.
    Experiments on two popular natural language datasets show that our method can extract WFA from RNN for natural language processing with better precision than existing approaches.
    Our code is available at \url{https://github.com/weizeming/Extract_WFA_from_RNN_for_NL}.
\keywords{Abstraction \and Weighted Finite Automata  \and Natural Language Models}
\end{abstract}
\section{Introduction}
    In the last decade, deep learning (DL) has been widely deployed
    in a range of applications, such as image processing~\cite{He_2016_CVPR}, speech recognition~\cite{abdel2014convolutional} and natural language processing~\cite{goldberg2017neural}.
    In particular, recurrent neural networks (RNNs) achieve great success in sequential data processing, e.g., time series forecasting~\cite{che2018recurrent}, text classification~\cite{wang2019convolutional} and language translation~\cite{datta2020neural}.
    However, the 
    complex internal design and gate control
    of RNNs make the interpretation and verification of their behaviors 
    rather challenging.
    To this end, much progress has been made to abstract RNN as a finite automaton,
    which is a finite state model with explicit states and transition matrix to
    characterize
    the behaviours of RNN in processing sequential data. 
    The extracted automaton also provides a practical foundation 
    for
    analyzing and verifying RNN behaviors,
    based on which existing mature techniques,
    such as logical formalism~\cite{logic} and model checking~\cite{modelChecking}, can be leveraged for RNN analysis.

   Up to the present, a series of extraction approaches leverage explicit learning algorithms (e.g., $L^*$ algorithm~\cite{angluin}) to extract a surrogate model of RNN.
    Such exact learning procedure has achieved great success in capturing the state dynamics of RNNs for processing formal languages~\cite{weiss2018,weiss2019,ok2020}.
    However, 
    the computational complexity of the exact learning algorithm limits its scalability to construct abstract models from RNNs for natural language tasks.
    
    Another technical line of automata extraction from RNNs is the compositional approach, which uses unsupervised learning algorithms to obtain discrete partitions of RNNs' state vectors and construct the transition diagram based on the discrete clusters and concrete state dynamics of RNNs.
    This approach 
    demonstrates better scalability and has been applied to robustness analysis and repairment of RNNs on large-scale tasks~\cite{wang2018,wang2018verification,du2019,dong2020,du2020,xie2021}.
    
    As a trade-off to the computational complexity, the compositional approach is faced with the problem of extraction consistency.
    Moreover, the alphabet size of natural language datasets is far larger than formal languages, but the extraction procedure is based on finite (sequential) data. 
    As a result, the transition dynamics \zm{are usually scarce when processing low-frequency tokens (words)}.
    
    However, the transition sparsity of the extracted automata for natural language tasks is yet to be addressed.
    In addition, state-of-the-art RNNs such as long short-term memory networks~\cite{lstm} show their great advantages on tracking long term 
    context dependency for natural language processing,
    \zm{but} the abstraction procedure inevitably leads to context information loss. 
    This motivates us to propose a heuristic method for transition rule adjustment to enhance the context-aware ability of the extracted model.

    In this paper, we propose an approach to extracting transition rules of weighted finite automata from RNNs for natural languages with a focus on the transition sparsity problem 
    and the loss of context dependency.
    Empirical investigation in Section \ref{sec:experiments} shows that the sparsity problem of transition diagram severely impacts the behavior consistency between RNN and the extracted model.
    To deal with the transition sparsity problem that no transition rules are learned at a certain state for a certain word, 
    which we refer to as \textit{missing rows} in transition matrices,
    we propose a novel method to fill in the transition rules for the missing rows based on the semantics of abstract states.
    Further, in order to enhance the context awareness ability of WFA, 
    we adjust the transition matrices to preserve part of the context information from the previous state, especially in the case of transition sparsity when the current transition rules cannot be relied on completely.
    Finally, we propose two tactics to augment the training samples to learn more transition behaviors of RNNs, which also alleviate the transition sparsity problem.
    Overall, our approach for transition rule extraction leads to better extraction consistency and can be applied to natural language tasks.

    To summarize, our main contributions are:
    \begin{enumerate}[(a)]
        \item A novel approach to extracting transition rules of WFA from RNNs to address the transition sparsity problem;
        \item An \zm{heuristic} method of adjusting transition rules to enhance the context-aware ability of WFA;
        \item A data augmentation method on training samples to track more transition behaviors of RNNs.
    \end{enumerate}

    The organization of this paper is as follows. 
    In Section \ref{sec:preliminary}, we show preliminaries about recurrent neural networks, weighted finite automata, and related notations and concepts.
    In Section \ref{sec:approach}, we present our transition rule extraction approach, including a generic outline on the automata extraction procedure, the transition rule complement approach for transition sparsity, the transition rule adjustment method for context-aware ability enhancement.
    We then present the data augmentation tactics in Section \ref{sec:augmentation} to reinforce the learning of dynamic behaviors from RNNs, along with the computational complexity analysis of the overall extraction approach.
    In Section \ref{sec:experiments}, we present the experimental evaluation towards the extraction consistency of our approach on two natural language tasks. 
    Finally, we discuss related works in Section \ref{sec:related} and conclude our work in Section \ref{sec:conclusion}.

\section{Preliminaries}
\label{sec:preliminary}
In this section,
we present the notations and definitions that will be used throughout the paper.

Given a finite alphabet $\Sigma$, we 
use $\Sigma^*$ to denote the set of sequences over $\Sigma$ and $\varepsilon$ to denote the empty sequence.
For $w\in\Sigma^*$, we use $|w|$ to denote its length, its $i$-th word as $w_i$ and its prefix with length $i$ as $w[:i]$.
For $x\in\Sigma$, $w\cdot x$ represents the concatenation of $w$ and $x$. 

\begin{definition}[RNN]
A \textit{Recurrent Neural Network (RNN)} for natural language processing is a tuple
$\mathcal{R}=(\mathcal X, \mathcal{S}, \mathcal{O},f, p)$,
where $\mathcal X$ is the input 
space; 
$\mathcal{S}$ is the internal state space;
$\mathcal{O}$ is the probabilistic output space;
$f: \mathcal{S}\times\mathcal X\to \mathcal{S}$ is the transition function;
$p: \mathcal{S}\to\mathcal{O}$ is the prediction function.
\end{definition}

\paragraph{RNN Configuration.}
In this paper, we consider RNN as a black-box model and focus on its stepwise probabilistic output for each input sequence.
The following definition of configuration characterizes the probabilistic outputs in response to a sequential input
fed to RNN.
    Given an alphabet $\Sigma$,
    let $\xi:\Sigma\to\mathcal X$ be the function that maps each word in $\Sigma$
    to its embedding vector in $\mathcal X$.
    We define $f^*: \mathcal{S}\times\Sigma^*\to\mathcal{S}$ recursively as $f^*(s_0,\xi(w\cdot x))=f(f^*(s_0, \xi(w)),\xi(x))$ and $f^*(s_0,\varepsilon)=s_0$,
    where $s_0$ is the initial state of $\mathcal{R}$.
The RNN configuration $\delta:\Sigma^*\to \mathcal{O}$ is defined as $\delta(w)=p(f^*(s_0, w))$.

\paragraph{Output Trace.}
To record the stepwise behavior of RNN when processing an input sequence $w$, 
we define the \textit{Output Trace} of $w$, i.e., the probabilistic output sequence, as
$T(w)=\{\delta(w[:i])\}_{i=1}^{|w|}$.
The $i$-th item of $T(w)$ indicates the probabilistic output given by $\mathcal{R}$ 
after taking the prefix of $w$ with length $i$ as input.

\begin{definition}[WFA]
Given a finite alphabet $\Sigma$, a \textit{Weighted Finite Automaton (WFA)} over $\Sigma$ is a tuple
$\mathcal{A}=(\hat S, \Sigma, E, \hat s _0, I, F)$, where $\hat S$ is the finite 
set of abstract states;
$E=\{E_\sigma|\sigma\in \Sigma\}$ is the
set of transition matrix $E_\sigma$ with size $|\hat S|\times|\hat S|$ for each token $\sigma\in\Sigma$;
$\hat s _0\in \hat S$ is the initial state;
$I$ is the initial vector, a row vector with size $|\hat S|$;
$F$ is the final vector, a column vector with size $|\hat S|$.
\end{definition}

\paragraph{Abstract States.}
Given a RNN $\mathcal{R}$ and a dataset $\mathcal D$,
let $\hat{\mathcal O}$ denote all stepwise probabilistic outputs given by executing $\mathcal R$ on $\mathcal D$,
i.e. $\hat{\mathcal{O}}=\bigcup\limits_{w\in\mathcal D}T(w)$.
The abstraction function $\lambda:\hat{\mathcal{O}}\to \hat{S}$ maps each probabilistic output to an abstract state $\hat s\in \hat S$.
As a result, the output set is divided into a number of abstract states by $\lambda$.
For each $\hat s\in\hat S$, the state $\hat s$ has explicit semantics that
the probabilistic outputs corresponding to $\hat s$ has similar distribution.
In this paper, we leverage the \textit{k-means} algorithm to construct the abstraction function.
We cluster all probabilistic outputs in $\hat O$ into some abstract states.
In this way, we construct the set of abstract states $\hat S$ with these discrete clusters and an initial state $\hat s_0$.

For a state $\hat s \in \hat S$, 
we define the \textit{center} of $\hat s$ as the 
average value of the probabilistic outputs $\hat o\in \hat O$ which are mapped to $\hat s$. 
More formally, the center of $\hat s$ is defined as follows:
$$\rho(\hat s)=\underset{\lambda({\hat o}) = \hat s}{\mathrm{Avg}}\{\hat o\}.$$
The center $\rho(\hat s)$ represents an approximation for the distribution tendency of probabilistic outputs $\hat o$ in $\hat s$.
For each state $\hat s \in \hat S$, we use the center $\rho(\hat s)$ as its weight,
as the center captures an approximation of the distribution tendency of this state.
Therefore, the final vector $F$ is $(\rho(\hat s_0),\rho(\hat s_1),\cdots, \rho(\hat s_{|\hat S|-1}))^t$.

\paragraph{Abstract Transitions.} In order to capture the dynamic behavior of RNN $\mathcal{R}$,
we define the abstract transition as a triple $(\hat{s}, \sigma, \hat{s}')$ where the original state $\hat{s}$
is the abstract state corresponding to a specific output $y$, i.e. $\hat s=\lambda(y)$; $\sigma$ is the next word of the input sequence to consume; $\hat s '$
is the destination state $\lambda(y')$ after $\mathcal{R}$ reads $\sigma$ and outputs $y'$. 
We use $\mathcal T$ to denote the set of all abstract transitions tracked from the execution of $\mathcal R$ on training samples.

\paragraph{Abstract Transition Count Matrices.} For each word $\sigma\in \Sigma$,
the abstract transition count matrix
of $\sigma$ is a matrix $\hat T_\sigma$
with size $|\hat S|\times|\hat S|$. 
The count matrices records the number of times that each abstract transition triggered.
Given the set of abstract transitions $\mathcal{T}$, the count matrix of $\sigma$ can be calculated as
$$\hat T_\sigma[i,j]=\mathcal{T}.count((\hat s_i,\sigma,\hat s_j)),\quad 1\le i,j\le |\hat S|.$$ 


As for the remaining components,
the alphabet $\Sigma$ is consistent with the alphabet of training set $\mathcal D$.
The initial vector $I$ is formulated according to the initial state $\hat s_0$.

For an input sequence $w=w_1w_2\cdots w_n\in\Sigma^*$, the WFA will calculate
its weight following $$I\cdot E_{w_1}\cdot E_{w_2}\cdots E_{w_n}\cdot P.$$


\begin{figure}[t]
    \includegraphics[width=\textwidth]{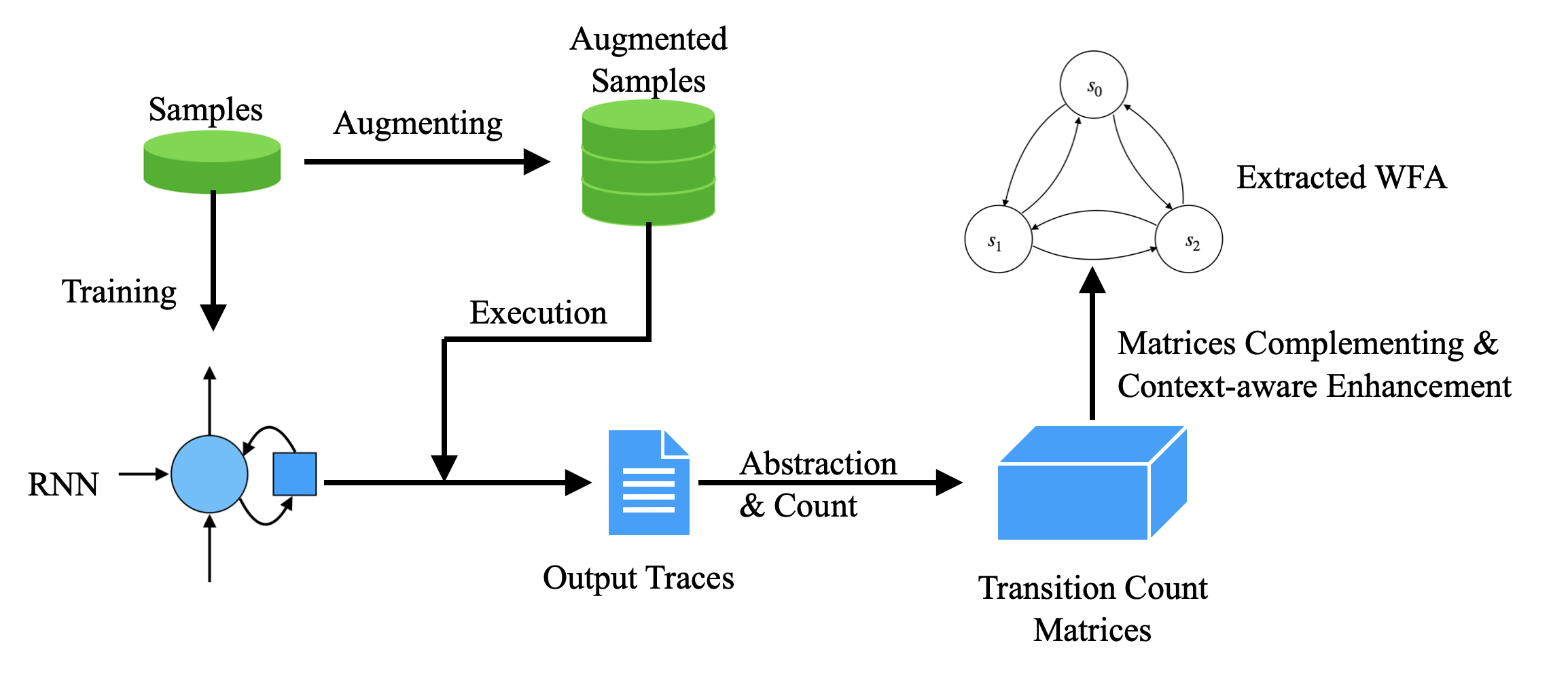}
    \caption{An illustration of our approach to extracting WFA from RNN.} \label{fig1}
\end{figure}

\section{Weighted Automata Extraction Scheme}
\label{sec:approach}
\subsection{Outline}
\label{subsec:outline}


We present the workflow of our extraction procedure
in Fig.~\ref{fig1}.
As the first step,
we generate augmented sample set ${\mathcal D}$ from the original training set $\mathcal D_0$ to enrich the transition dynamics of RNN behaviors and alleviate the transition sparsity.
Then, we execute RNN $\mathcal{R}$ on the augmented sample set ${\mathcal D}$,
and record the probabilistic output trace $T(w)$ of each input sentence $w\in \mathcal D$.
With the output set $\hat O=\bigcup\limits_{w\in \mathcal D}T(w)$, we cluster the probabilistic outputs into abstract states $\hat S$,
and generate abstract transitions $\mathcal T$ from the output traces $\{T(w)|w\in \mathcal D\}$.
All transitions constitute the abstract transition count matrices $\hat T_\sigma$ for all $\sigma\in\Sigma$.

Next, we construct the transition matrices $E=\{E_\sigma|\sigma\in\Sigma\}$.
    Based on the abstract states $\hat S$ and count matrices $\hat T$,
    we construct the transition matrix $E_\sigma$ for each word $\sigma\in\Sigma$.
    Specifically, we use frequencies to calculate the transition probabilities.
    Suppose that there are $n$ abstract states in $\hat S$.
    The $i$-th row of $E_\sigma$, which indicates the  probabilistic transition distribution over states when $\mathcal{R}$ is in state $\hat s _i$
    and 
    consumes $\sigma$, is calculated as 
    \begin{equation}
        E_\sigma[i,j] = \frac{\hat T_\sigma[i,j]}{\sum\limits_{k=1}^n \hat T_\sigma[i,k]}.\label{transition matrix}
    \end{equation}
    This empirical rule faces the problem that the denominator of (\ref{transition matrix}) could be zero,
    which means that the word $\sigma$ never appears when the RNN $\mathcal{R}$ is in abstract state $\hat s_i$.
    In this case, one should decide how to fill 
    in the transition rule of the \textit{missing rows} in $E_\sigma$.
    In Section \ref{subsec:missing_rows},
    we present a novel approach for transition rule complement.
    Further, 
    to preserve more contextual information 
    in the case of transition sparsity,
    we propose an approach to enhancing the context-aware ability of WFA by adjusting the transition matrices,
    which is discussed in Section \ref{subsec:context}.
\xy{Note that our approach is generic and could be applied to other RNNs besides the domain of natural language processing.
}
\subsection{Missing Rows Complementing}
\label{subsec:missing_rows}
    Existing approaches for transition rule extraction usually face the problem of transition sparsity,
    i.e., \textit{missing rows} in the transition diagram.
    In the context of formal languages, the probability of occurrence of missing rows is quite limited,
    since the size of the alphabet is small and each token in the alphabet can appear sufficient number of times.
    However, in the context of natural language processing, the occurrence of missing rows is quite frequent.
    The following proposition gives an approximation of 
    the occurrence frequency
    of missing rows.

    \begin{proposition}
        Assume 
        an alphabet $\Sigma$ with $m=|\Sigma|$ words, a natural language dataset $\mathcal D$ over $\Sigma$ which has $N$ words in total,
        a RNN $\mathcal{R}$ trained on $\mathcal D$,
        the extracted abstract states $\hat S$ and 
        transitions 
        $\mathcal{T}$.
        Let $\sigma_i$ 
        denote 
        the $i$-th most frequent word 
        occurred
        in $\mathcal D$ and $t_i = \mathcal{T}.count((*,\sigma_i,*))$
        indicates
        the occurrence times of $\sigma_i$ in $\mathcal D$.
        The median of $\{t_i|1\le i\le m\}$ can be estimated as $$t_{[\frac{m}{2}]}=\frac{2N}{m\cdot \ln m}.$$
    \end{proposition}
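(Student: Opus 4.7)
The plan is to derive the stated estimate from Zipf's law, which is the standard empirical regularity governing word-frequency distributions in natural-language corpora. Under Zipf's law, if $\sigma_i$ denotes the $i$-th most frequent word, then its occurrence count satisfies $t_i \approx C/i$ for some constant $C$ depending on the corpus. So the first step is to simply invoke Zipf's law as the modelling assumption; without such an assumption the proposition has no content, since the $t_i$'s could a priori be any decreasing sequence summing to $N$.

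Given the Zipfian form $t_i = C/i$, I would next pin down $C$ via the total-count constraint. Since the dataset has $N$ words in total, we must have
\begin{equation*}
N \;=\; \sum_{i=1}^{m} t_i \;=\; C\sum_{i=1}^{m}\frac{1}{i} \;\approx\; C\ln m,
\end{equation*}
using the standard harmonic-sum approximation $H_m = \sum_{i=1}^{m} 1/i \approx \ln m$ for large $m$. Solving gives $C \approx N/\ln m$, hence $t_i \approx N/(i\ln m)$.

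The final step is to evaluate this expression at the median rank $i = \lfloor m/2 \rfloor$, which yields
\begin{equation*}
t_{[m/2]} \;\approx\; \frac{N}{(m/2)\ln m} \;=\; \frac{2N}{m\ln m},
\end{equation*}
as claimed.

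The main obstacle here is conceptual rather than technical: the result is essentially a corollary of Zipf's law plus the harmonic approximation, so the bulk of the work lies in stating the right empirical hypothesis and justifying why it is reasonable for the natural-language datasets the paper targets. The two approximations used (Zipf's $1/i$ law and $H_m \approx \ln m$) should be explicitly flagged, and it should be noted that the identification of ``median'' with the value at rank $\lfloor m/2 \rfloor$ is the interpretation being used (i.e., the median of the sorted sequence $t_1 \ge t_2 \ge \cdots \ge t_m$ by position, not by value). Higher-order corrections (Euler--Mascheroni constant in $H_m$, deviations from exact Zipf scaling at the head and tail of the distribution) are absorbed into the ``$\approx$'' and do not affect the leading-order estimate $2N/(m\ln m)$.
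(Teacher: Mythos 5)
Your proposal is correct and follows essentially the same route as the paper: both invoke Zipf's law (your normalization $t_i = C/i$ with $C = N/H_m$ is exactly the paper's $t_i/N \approx i^{-1}/\sum_{k=1}^m k^{-1}$), apply the harmonic-sum approximation $H_m \approx \ln m$, and substitute $i = m/2$ to obtain $2N/(m\ln m)$. Your additional remarks on flagging the approximations and the rank-based reading of ``median'' are reasonable clarifications but do not change the argument.
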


    \begin{proof}
        The Zipf's law~\cite{powers1998applications} shows that $$\frac{t_i}{N}\approx \frac{i^{-1}}{\sum\limits_{k=1}^m k^{-1}}.$$
        Note that $\sum\limits_{k=1}^m k^{-1}\approx \ln m$ and take $i$ to be $\frac{m}{2}$, we complete our proof.
    
    \end{proof}
    \begin{example}
        In the QC news dataset~\cite{qc}, which has $m=20317$ words in its alphabet and $N=205927$ words in total, 
        the median of $\{t_i\}$ is approximated to $\frac{2N}{m\cdot \ln m}\approx 2$.
        This indicates that about half of $E_\sigma$ are constructed with no more than $2$ transitions.
        In practice,
        the number of abstract states is usually far more than the transition numbers for these words,
        making most of rows 
        of their transition matrices \textit{missing rows}.
    \end{example}

    Filling the missing row with $\vec 0 $ is a simple solution, since no information were provided from the transitions.
    However, as estimated above, this solution will 
    lead to the problem of transition sparsity, i.e.,
    the transition matrices for uncommon words are nearly null.
    Consequently,
    if the input sequence includes some uncommon words,
    the weights over states tend to vanish.
    We refer to this solution as \textit{null filling}.
    
    Another simple idea is to use the uniform distribution over states for fairness.
    In \cite{weiss2019}, the
    uniform distribution is used as the transition distribution
    for unseen tokens
    in the context of formal language tasks.
    However, for natural language processing,
    this solution still loses information of the current word, despite that it avoids the weight vanishment over states.
    We refer to this solution as \textit{uniform filling}.
    \cite{zhang2021} uses the \textit{synonym} transition distribution for an unseen token at a certain state. However, it increases the computation overhead when performing inference on test data, since it requires to calculate and sort the distance between the available tokens at a certain state and the unseen token.
    
    \zm{To this end, we propose a novel approach to constructing the transition matrices based on two empirical observations.
    First, each abstract state $\hat s \in \hat S$ has explicit semantics, i.e. the probabilistic distribution over labels,
    and similar abstract states tend to share more similar transition behaviours. 
    The similarity of abstract states is defined by their semantic distance as follows.
    }
    

    \begin{definition}[State Distance]
        For two abstract states $\hat s_1$ and $\hat s_2$, the distance between $\hat s_1$ and $\hat s_2 $ is
        defined by the Euclidean distance between their center: $$dist(\hat s_1, \hat s_2) = \lVert \rho(\hat s_1)-\rho(\hat s_2) \rVert_2.$$
    \end{definition}

    We calculate the distance between 
    each pair of abstract states,
    which forms a \textit{distance matrix} $M$ 
    where each element 
    $M[i,j] = dist(\hat s_i,\hat s_j)$ for $1\le i,j\le |\hat S|$.
    For a missing row in $E_\sigma$, following the heuristics that 
    similar abstract states are more likely to have similar behaviors,
    we observe the transition behaviors 
    from other abstract states,
    and simulate the missing transition behaviors
    weighted by distance \zm{between states}.
    \zm{Particularly}, in order to avoid numerical underflow, we leverage \textit{softmin} on distance to 
    bias the weight to states that share more similarity.
    Formally,
    for a missing row $E_\sigma[i]$, the weight of information 
    set for
    another row $E_\sigma[j]$ is defined by $e^{-M[i,j]}$.

    \zm{Second, it is also observed that sometimes the RNN just remains in the current state after reading a certain word.
    Intuitively, this is because part of words in the sentence do not deliver significant information in the task.
    Therefore, we consider simulating
    behaviors from other states whilst remaining 
    in the current state with a certain probability.}
    
    
    \zm{In order to balance the trade-off between referring to behaviors from other states and remaining still,}
    we introduce a hyper-parameter $\beta$ named \textit{reference rate}, such that when WFA 
    is faced with a missing row,
    it has a probability of $\beta$ to refer 
    to the transition behaviors
    from other states,
    and 
     in the meanwhile
    has a probability of $1-\beta$ to keep still.
    \xy{We select the parameter $\beta$ according to the proportion of self-transitions, i.e., transitions $(\hat s, \sigma, \hat s')$ in $\mathcal T$ where $\hat s=\hat s'$.}

    To sum up, the complete 
    transition rule
    for the missing row is 
    \begin{equation}
    E_\sigma[i,j] = \beta\cdot \frac{\sum\limits_{k=1}^n e^{- M[i,k]} \cdot\hat T _\sigma[k,j]}{\sum\limits_{l=1}^n\sum\limits_{k=1}^n e^{- M[i,k]} \cdot\hat T _\sigma[k,l]} + (1-\beta)\cdot\delta_{i,j}.\label{missing row}
    \end{equation}
    Here $\delta_{i,j}$ is the Kronecker symbol:
    $$\delta_{i,j} =\begin{cases}1, & j=i\\0, & j\ne i\end{cases}.$$
    In practice, we can calculate $\sum\limits_{k=1}^n e^{- M[i,k]} \cdot\hat T _\sigma[k,j]$ for each $j$ and 
    then make division on their summation once and for all,
    which can reduce the computation overhead on transition rule extraction. 

\subsection{Context-Aware Enhancement}
\label{subsec:context}
    
    For NLP tasks, the memorization of long-term context information is crucial.
    One of the advantages of RNN and its advanced design LSTM networks is 
    the ability to capture long-term dependency.
    We 
    expect the extracted WFA to simulate the step-wise behaviors of RNNs whilst 
    keeping track of context information  along with the state transition.
    To this end, we propose an approach to adjusting the transition matrix
    such that
    the WFA 
    can remain in the current state with a certain probability.

    Specifically, we select a hyper-parameter $\alpha\in [0,1]$ as the \textit{static probability}.
    For each word $\sigma\in\Sigma$ and its transition matrix $E_\sigma$,
    we replace the matrix with the \textit{context-aware enhanced matrix} $\hat E _\sigma$ 
    as follows:
    \begin{equation}
        \hat E _\sigma = \alpha\cdot I_n + (1-\alpha)\cdot E_\sigma
        \label{context}
    \end{equation}
    where $I_n$ is the identity matrix. 
    
    The context-aware enhanced matrix has explicit semantics.
    When the WFA is in state $\hat s_i$ and 
    ready to process a new word $\sigma$,
    it has a probability of $\alpha$ (the \textit{static probability}) to remain in $\hat s_i$,
    or follows the original transition distribution $E_\sigma[i,j]$ with a probability $1-\alpha$.

    Here we present an illustration of how context-aware enhanced matrices deliver long-term context information.
    Suppose that a context-aware enhanced WFA $\mathcal A$ is 
    processing a sentence $w\in \Sigma^*$ with length $|w|$.
    We denote $d_i$ as the distribution over all abstract states after $\mathcal A$ reads the prefix $w[:i]$, 
    and
    particularly $d_0=I$ is the initial vector of $\mathcal A$.
    We use $Z_i$ to denote 
    the decision made by $\mathcal A$ 
    based on $d_{i-1}$ and the original transition matrix $E_{w_i}$.
    Formally, $d_i = d_{i-1}\cdot \hat E_{w_i}$ and $Z_i = d_{i-1}\cdot E_{w_i}$.

    The $d_i$ can be regarded as the information 
    obtained from the prefix $w[:i]$ by $\mathcal A$ before it 
    consumes $w_{i+1}$,
    and $Z_i$ can be considered as the decision made by $\mathcal A$ after it reads $w_{i}$. 
    \begin{proposition}
        The $i$-th step-wise information $d_i$ which delivered by
        processing
        $w[:i]$ contains the decision information $Z_j$
        of prefix $w[:j]$ with a proportion of $(1-\alpha)\cdot \alpha^{i-j}$, $1\le j\le i$.
    \end{proposition}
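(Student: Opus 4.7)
The plan is to reduce the claim to a one-step recurrence that can be unrolled by induction on $i$. The key observation is that plugging the definition $\hat E_{w_i} = \alpha\cdot I_n + (1-\alpha)\cdot E_{w_i}$ into the update rule $f_i = f_{i-1}\cdot \hat E_{w_i}$ distributes the matrix product over the sum, giving
\[
f_i = \alpha\cdot f_{i-1} + (1-\alpha)\cdot f_{i-1}\cdot E_{w_i} = \alpha\cdot f_{i-1} + (1-\alpha)\cdot M_i.
\]
This is the core identity; once it is in hand, the rest is essentially a geometric-series unrolling.

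Next I would proceed by induction on $i$. The base case $i=1$ reads $f_1 = \alpha\cdot f_0 + (1-\alpha)\cdot M_1$, where the coefficient of $M_1$ is $(1-\alpha)\cdot\alpha^{0}$, matching the formula. For the inductive step, assuming the decomposition
\[
f_{i-1} = \alpha^{i-1}\cdot f_0 + \sum_{j=1}^{i-1}(1-\alpha)\cdot\alpha^{(i-1)-j}\cdot M_j,
\]
I would substitute into the one-step recurrence to obtain
\[
f_i = \alpha^{i}\cdot f_0 + \sum_{j=1}^{i-1}(1-\alpha)\cdot\alpha^{i-j}\cdot M_j + (1-\alpha)\cdot M_i = \alpha^{i}\cdot f_0 + \sum_{j=1}^{i}(1-\alpha)\cdot\alpha^{i-j}\cdot M_j.
\]
Reading off the coefficient of $M_j$ for each $1\le j\le i$ gives exactly $(1-\alpha)\cdot\alpha^{i-j}$, which is what the proposition claims. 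The residual term $\alpha^{i}\cdot f_0$ accounts for the initial vector and is consistent with the intuition that the contribution of the initial information decays geometrically at rate $\alpha$.

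I do not expect a serious obstacle here: the mathematics is a direct distributive expansion followed by an induction, and both the base case and the inductive step are mechanical. The only mildly delicate point is interpretive rather than technical, namely making precise what it means for $f_i$ to ``contain the decision information $M_j$ with a proportion of $(1-\alpha)\cdot\alpha^{i-j}$.'' I would resolve this by stating explicitly that the closed-form expression above exhibits $f_i$ as an affine combination of $f_0,M_1,\dots,M_i$, and that the claimed proportion is precisely the scalar coefficient attached to $M_j$ in this combination. With that convention fixed, the induction delivers the proposition immediately.
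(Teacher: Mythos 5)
Your proposal is correct and follows essentially the same route as the paper: both derive the one-step identity $f_i = \alpha f_{i-1} + (1-\alpha)M_i$ from $\hat E_{w_i} = \alpha I_n + (1-\alpha)E_{w_i}$ and then unroll it to $f_i = (1-\alpha)\sum_{k=1}^i \alpha^{i-k}M_k + \alpha^i f_0$, with the paper doing the unrolling informally ("recursively") while you make the induction explicit. The added precision about reading the proportion off as the scalar coefficient of $M_j$ is a reasonable clarification but not a different argument.
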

    \begin{proof}
    Since $\hat E_{w_i} = \alpha\cdot I_n + (1-\alpha)\cdot E_{w_i}$, we can calculate that
    \begin{equation}
        d_i = d_{i-1} \cdot \hat E_{w_i} = d_{i-1}\cdot [\alpha\cdot I_n + (1-\alpha)\cdot E_{w_i}]=\alpha\cdot d_{i-1} + (1-\alpha)\cdot Z_i.\label{recursion}
    \end{equation}
    Using (\ref{recursion}) recursively, we have $$d_i = (1-\alpha)\sum_{k=1}^{i} \alpha^{i-k}\cdot Z_k + \alpha^i \cdot I.$$

    \end{proof}

    This shows the information delivered by $w[:i]$ 
    refers to the decision made by $\mathcal A$ on each prefix included in $w[:i]$,
    and the portion vanishes exponentially.
    The effectiveness of the
    context-aware transition matrix adjustment method will be discussed in Section \ref{sec:experiments}.

    The following example presents 
    the complete approach for transition rule extraction, i.e.,  
    to generate the transition matrix $\hat E_\sigma$ with the missing row filled in and context enhanced,
    from the count matrix $\hat T_\sigma$ for a word $\sigma\in\Sigma$.
    \begin{example}
    
    Assume that there are three abstract states in $\hat S = \{\hat s_1,\hat s_2,\hat s_3\}.$
        Suppose the count matrix for $\sigma$ is $\hat T_\sigma$.
        $$
        \hat T_\sigma = \begin{bmatrix} 1 & 3 & 0 \\ 1 & 1 & 0 \\ 0 & 0 & 0\end{bmatrix},
        E_\sigma = \begin{bmatrix}0.25 & 0.75 & 0\\ 0.5 & 0.5 & 0 \\ 0.15 & 0.35 & 0.5\end{bmatrix},
        \hat E_\sigma = \begin{bmatrix}0.4 & 0.6 & 0 \\ 0.4 & 0.6 & 0 \\ 0.12 & 0.28 & 0.6 \end{bmatrix}.
        $$
        For the first two rows (states), there exist transitions for $\sigma$,
        thus we can calculate the transition distribution of
        these two rows in $E_\sigma$ 
        in the usual way.
        However, the third row is a \textit{missing row}. 
        We set the \textit{reference rate} as $\beta=0.5$,
        and suppose that the distance between states satisfies $e^{-M[1,3]}= 2 e^{-M[2,3]}$, generally indicating
        the distance between $\hat s_1$ and $\hat s_3$ is nearer than $\hat s_2$ and $\hat s_3$.
        With the transitions from $\hat s_1$ and $\hat s_2$, we can complement the transition rule of the third row in $E_\sigma$ through (\ref{missing row}).
        The result shows that the behavior from $\hat s_3$ is more similar to $\hat s_1$ than $\hat s_2$, due to the 
        nearer distance.
        Finally, we construct $\hat E_\sigma$ with $E_\sigma$. Here we take the \textit{static probability} $\alpha = 0.2$,
        thus $\hat E_\sigma = 0.2\cdot I_3 + 0.8\cdot E_\sigma$. The result shows that the WFA with $\hat E_\sigma$ has higher probability
        to remain in the current state after consuming $\sigma$, which can preserve more information from the prefix before $\sigma$.

    \end{example}

\section{Data Augmentation}
\label{sec:augmentation}
Our proposed approach for transition rule extraction provides a solution to the transition sparsity problem.
Still, we hope to learn more dynamic transition behaviors from the target RNN, 
especially for the words with relatively low frequency to characterize their transition dynamics sufficiently based on the finite data samples.
Different from formal languages, we can generate more 
natural language samples
automatically,
as long as 
the augmented sequential data are sensible with clear semantics and compatible with the original learning task.
Based on the augmented samples, we are able to track more behaviors of the RNN 
and build the abstract model with higher precision.
In this section, we 
introduce two data augmentation tactics 
for natural language processing tasks: \textit{Synonym Replacement}
and \textit{Dropout}.

\subsubsection{Synonym Replacement.} 
Based on the distance quantization among the word embedding vectors,
we can 
obtain a list of synonyms for each word in $\Sigma$.
For a word $\sigma\in\Sigma$, the \textit{synonyms} of $w$ are 
defined as
the 
top $k$ most similar words of $\sigma$ in $\Sigma$, where $k$ is a hyper-parameter. 
The similarity 
among the words 
is 
calculated
based on the Euclidean distance between the word embedding vectors over $\Sigma$.

Given a dataset $\mathcal D_0$ over $\Sigma$, for each sentence $w\in \mathcal D_0$,
we generate a new sentence $w'$ by replacing some words in $w$ with their synonyms.
Note that we hope that the uncommon words in $\Sigma$ should appear more times,
so as to gather more dynamic behaviors of RNNs when processing such words.
Therefore, we set the probability that a certain word $\sigma\in w$ gets replaced to be in a 
negative correlation
to its frequency of occurrence, i.e. the $i$-th most frequent word is replaced with a probability $\frac{1}{i+1}$.

\subsubsection{Dropout.}
Inspired by the regularization
 technique \textit{dropout},
we also propose a similar tactic to generate new sentences from $\mathcal D_0$.
Initially, 
we 
introduce a new word named \textit{unknown word} and denote it as $\left\langle \textbf{unk}\right\rangle$.
For the sentence in $w\in\mathcal D_0$ that has been processed by synonym replacing,
we further replace the words that hasn't been replaced with $\left\langle \textbf{unk}\right\rangle$ 
with a certain probability.
Finally, new sentences generated by both synonym replacement and dropout form the augmented dataset $\mathcal D$.

With the dropout tactic, we can observe the behaviors of RNNs when it 
processes an unknown word $\hat\sigma\not\in \Sigma$
that hasn't appeared in $\mathcal D_0$. 
Therefore, the extracted WFA can also show better generalization ability.
An example of 
generating a new sentence from $\mathcal D_0$ is shown as follows.

\begin{example}
    Consider a
    sentence $w$ from the original training set $\mathcal D_0$,
    $w=$[``I'', ``really'', ``like'', ``this'', ``movie'']. First, the word ``like'' is chosen to be replaced by one of its synonym ``appreciate''.
    Next, the word ``really'' is dropped from the sentence, i.e. replaced by the unknown word $\langle \textbf{unk}\rangle$.
    Finally, we get a new sentence $w'=$[``I'', ``$\langle \textbf{unk}\rangle$'', ``appreciate'', ``this'', ``movie''] and 
    put it into the
    augmented dataset $\mathcal D$.

    Since the word ``appreciate'' may be an uncommon word in $\Sigma$, we can capture a new transition information for it given by RNNs. Besides, we can also observe the behavior of RNN when it reads an unknown word after the prefix [``I''].
\end{example}

\subsubsection{Computational Complexity.}
    The time complexity of the whole workflow is analyzed as follows.
    Suppose that the set of training samples $\mathcal D_0$ 
    has
    $N$ words in total
    and its alphabet $\Sigma$ contains $n$ words, and is augmented as $\mathcal D$ with $t$ epochs (i.e. each sentence in $\mathcal{D}_0$ is transformed to $t$ new sentences in $\mathcal{D}$),
    hence $|\mathcal{D}|=(t+1)N$.
    Assume that a probabilistic output of RNNs is a $m$-dim vector,
    and the abstract states set $\hat S$ contains $k$ states.
    
    To start with, 
    the augmentation of $\mathcal D_0$ and tracking of probabilistic outputs in $\mathcal{D}$ will
    be completed in $\mathcal O(|\mathcal D|)=\mathcal O(t\cdot N)$ time.
    Besides, the time complexity of k-means clustering algorithm is $\mathcal O(k\cdot |\mathcal D|)=\mathcal O(k\cdot t\cdot N)$.
    The count of abstract transitions will be done in $\mathcal O(n)$.
    As for the processing of transition matrices, we need
    to
    calculate the transition 
    probability for each 
    word $\sigma$ with each 
    source state $\hat s_i$ and 
    destination state $\hat s_j$,
    which costs $\mathcal O(k^2\cdot n)$ time. Finally, the context-aware enhancement on transition matrices takes $\mathcal{O}(k\cdot n)$ time.

    Note that $\mathcal O(n)=\mathcal O(N)$, hence we can conclude that the time complexity of our whole workflow
    is $\mathcal O(k^2\cdot t\cdot N)$.
    So the time complexity of our approaches only takes linear time w.r.t. the 
    size of the dataset,
    which provides theoretical 
    extraction overhead for
    large-scale data applications.
\section{Experiments}
\label{sec:experiments}
In this section, we 
evaluate our extraction approach on two 
natural language
datasets and demonstrate its performance 
in terms of precision and scalability.
\paragraph{Datasets and RNNs models.} 
    We select two popular datasets for NLP tasks and train the target RNNs on them.
    \begin{enumerate}
        \item The CogComp QC Dataset (abbrev. QC)~\cite{qc} contains news titles which are labeled with different topics.
        The dataset is divided into 
        a training set containing 20k samples and 
        a test set containing 8k samples.
        Each sample is labeled with one of seven categories.
        We train an LSTM-based RNN model $\mathcal R$ on 
        the training set, which achieves an accuracy of $81\%$
        on the test set.

        \item The Jigsaw Toxic Comment Dataset (abbrev. Toxic)~\cite{toxic} contains comments from Wikipedia's talk page edits,
        with each comment labeled 
        toxic or not. 
        We select 25k non-toxic samples and toxic samples respectively,
        and divide them into the training set and test set in a ratio of four to one. 
        We train another LSTM-based RNN model which achieves $90\%$ accuracy.
    \end{enumerate}

    \paragraph{Metrics.} For the purpose of 
    representing the behaviors of RNNs better,
    we 
    use \textit{Consistency Rate (CR)} as our 
    evaluation metric.
    For a sentence in the test set $w\in \mathcal{D}_{test}$,
    we denote $\mathcal R(w)$ and $\mathcal A(w)$ as the 
    prediction
    results 
    of the RNNs and WFA, respectively.
    The \textit{Consistency Rate} 
    is defined formally as $$CR = \frac{|\{w\in D_{test}:\mathcal A (w) = \mathcal R (w)\}|}{|\mathcal D_{test}|}.$$

    \subsubsection{Missing Rows Complementing.}
    As discussed in Section \ref{subsec:missing_rows},
    we take two approaches as baselines, the \textit{null filling}
    and the \textit{uniform filling}. 
    The two WFA extracted with these two approaches are denoted as $\mathcal A_0$
    and $\mathcal A_U$, respectively. 
    The WFA extracted by our \textit{empirical filling} approach is denoted as $\mathcal A_E$. 

    \begin{table}
        \centering
        \caption{Evaluation results of different filling approaches on missing rows.}\label{missing row cr}
        \renewcommand\arraystretch{1.5}

        \setlength{\tabcolsep}{3mm}{
        \begin{tabular}{|c|c|c|c|c|c|c|}
            \hline
            \multirow{2}{*}{Dataset} &
            
            \multicolumn{2}{|c|}{$\mathcal{A}_0$} & \multicolumn{2}{c|}{$\mathcal{A}_U$} & \multicolumn{2}{c|}{$\mathcal{A}_E$}\\
            \cline{2-7}
             & CR(\%) & Time(s) & CR(\%) & Time(s) & CR(\%) & Time(s)\\
            \hline
             QC & 26 & 47 & 60 & 56 & \textbf{80} & 70\\
            \hline
            Toxic & 57 & 167 & 86 & 180 & \textbf{91} & 200 \\
            \hline
            \end{tabular}}
    \end{table}

    Table \ref{missing row cr} shows the evaluation results of three rule filling approaches.
    We conduct the comparison experiments on QC and Toxic datasets 
    and select the cluster number for state abstraction 
    as $40$ and $20$ for the QC and Toxic datasets, respectively.
    
    The three columns labeled with the type of WFA show the 
    evaluation results of different approaches. 
    For the $\mathcal A_0$ based on blank filling, the WFA returns the weight of most sentences in $\mathcal D$ with $\vec 0$,
    which fails to provide sufficient information for prediction. 
    For the QC dataset, only a quarter of sentences in the test set are classified correctly.
    The second column shows that the performance of $\mathcal A_U$ is better than $\mathcal A_0$.
    The last column presents the evaluation result of $\mathcal A_E$, which fills the missing rows by our approach.
    In this experiment, the hyper-parameter \textit{reference rate} is set as $\beta=0.3$.
    We can see that our empirical approach achieves significantly better accuracy, which is $20\%$ and $5\%$ higher than uniform filling on the two datasets, respectively.

    The columns labeled \textit{Time} show the execution time of the whole extraction workflow, from tracking transitions to evaluation on test set, but not include the training time of RNNs.
We can see that the extraction overhead of our approach ($\mathcal A_E$) is
about the same as $\mathcal{A}_U$ and $\mathcal A_0$.

    \subsubsection{Context-Aware Enhancement.}
    In this experiment, we leverage the context-aware enhanced matrices when constructing the WFA.
    We adopt 
    the same configuration on cluster numbers $n$ from the comparison experiments above, i.e.
    $n=40$ and $n=20$. The columns titled \textit{Configuration} 
    indicate
    if the extracted WFA leverage context-aware matrices.
    We also take the WFA with different filling approaches, the uniform filling and empirical filling, into comparison.
    \xy{Experiments on null filling is omitted due to limited precision.}
    
    \begin{table}
        \centering
        \caption{Evaluation results of with and without context-aware enhancement.}\label{context cr}
        \renewcommand\arraystretch{1.5}

        \setlength{\tabcolsep}{4mm}{
        \begin{tabular}{|c|c|c|c|c|c|}
            \hline
            \multirow{2}{*}{Dataset}& \multirow{2}{*}{Configuration} & \multicolumn{2}{c|}{$\mathcal{A}_U$} & \multicolumn{2}{c|}{$\mathcal{A}_E$}\\
            \cline{3-6}
            & & CR(\%) & Time(s) & CR(\%) & Time(s) \\
            \hline
            \multirow{2}{*}{QC} 
            & None & 60 & 56 & 80 & 70\\
            \cline{2-6}
            & Context & 71 & 64 & \textbf{82} & 78\\
            \hline
            \multirow{2}{*}{Toxic} 
            & None & 86 & 180 & 91 & 200\\
            \cline{2-6}
            & Context & 89 & 191 & \textbf{92} & 211\\
            \hline
            \end{tabular}}
    \end{table}
    
    The experiment results
    are in Table \ref{context cr}.
    For the QC dataset, we 
    set the \textit{static probability} as $\alpha=0.4$. 
    The consistency rate of WFA $\mathcal A_U$ 
    improves 
    $11\%$
    with the context-aware enhancement,
    and $\mathcal A_E$ improves $2\%$. As for the Toxic dataset, we take $\alpha=0.2$
    and the consistency rate of the two WFA improves $3\%$ and $1\%$ respectively.
    This shows that the WFA with context-aware enhancement remains more information from the prefixes of sentences,
    making it simulate RNNs better.

    Still, the context-aware enhancement processing costs little time, since we only calculate the adjusting formula (\ref{context}) for each
    $E_\sigma$ in $E$. 
    The additional
    extra time consumption is 8s for the QC dataset and 11s for the Toxic dataset.

    \subsubsection{Data Augmentation}
    Finally, we evaluate the WFA extracted with transition behaviors from augmented data.
    Note that the two experiments above are based on the primitive training set $\mathcal D_0$.
    In this experiment, we leverage the data augmentation tactics to generate the augmented training set $\mathcal D$,
    and extract WFA with data samples from $\mathcal D$.
    In order to get best performance, we build WFA with contextual-aware enhanced matrices.

    \begin{table}
        \centering
        \caption{Evaluation results of with and without data augmentation.}\label{aug cr}
        \renewcommand\arraystretch{1.5}

        \setlength{\tabcolsep}{4mm}{
        \begin{tabular}{|c|c|c|c|c|c|}
            \hline
            \multirow{2}{*}{Dataset}& \multirow{2}{*}{Samples} & \multicolumn{2}{c|}{$\mathcal{A}_U$} & \multicolumn{2}{c|}{$\mathcal{A}_E$}\\
            \cline{3-6}
            & & CR(\%) & Time(s) & CR(\%) & Time(s) \\
            \hline
            \multirow{2}{*}{QC} 
            & $\mathcal D_0$ & 71 & 64 & 82 & 68\\
            \cline{2-6}
            & $\mathcal D$ & 76 & 81 & \textbf{84} & 85\\
            \hline
            \multirow{2}{*}{Toxic} 
            & $\mathcal D_0$ & 89 & 191 & 92 & 211\\
            \cline{2-6}
            & $\mathcal D$ & 91 & 295 & \textbf{94} & 315\\
            \hline
            \end{tabular}}
    \end{table}

    Table \ref{aug cr} shows the results of consistency rate of WFA extracted with and without augmented data.
    The rows labeled $\mathcal D_0$ show the results of WFA that are extracted with the primitive training set, 
    and the result from the augmented data is shown in rows labeled $\mathcal D$.
    With more transition behaviors tracked, the WFA extracted with $\mathcal D$ 
    demonstrates better precision.
    Specifically, the WFA extracted with
    both empirical filling and context-aware enhancement achieves a further $2\%$ increase in consistency rate
    on the two datasets.

    To summarize, by using our transition rule extraction approach, 
    the consistency rate of extracted WFA on the QC dataset and the Toxic dataset achieves $84\%$ and $94\%$, respectively. 
    Taking the primitive extraction algorithm with uniform filling as baseline, 
    of which experimental results 
    in terms of CR 
    are $60\%$ and $86\%$,
    our 
    approach achieves an improvement of $22\%$ and $8\%$ in consistency rate.
    As for the time complexity, the time consumption of our approach 
    increases 
    from $56s$ to $81s$ on QC dataset,
    and from $180s$ to $315s$ on Toxic dataset, 
    which indicates the efficiency and scalability 
    of our rule extraction approach. 
    There is no significant time cost of 
    adopting our approach further for complicated natural language tasks.
    We can conclude that our 
    transition rule extraction approach 
    makes better approximation of RNNs,
    and 
    is also efficient enough 
    to be applied to practical applications for large-scale 
    natural language tasks. 
\section{Related Work} 
\label{sec:related}
Many research efforts
have been 
made to abstract, verify and repair RNNs.
As Jacobsson reviewed in~\cite{jacobsson2005}, the rule extraction approach of RNNs can be divided into two categories:
pedagogical approaches and compositional approaches.

\paragraph*{Pedagogical Approaches.}
Much progress has been achieved by using pedagogical approaches to abstracting RNNs by leveraging explicit learning algorithms,
such as the $L^*$ algorithm~\cite{angluin}.
The earlier work dates back to two decades ago, when Omlin et al. attempted to extract
a finite model from Boolean-output RNNs~\cite{omlin1992,omlin1996,omlin1996b}.
Recently, Weiss et al. proposed to levergae the $L^*$ algorithm to extract DFA from RNN-acceptors~\cite{weiss2018}.
Later, they 
presented a weighted extension of $L^*$ algorithm that extracted probabilistic determininstic finite automata (PDFA)
from RNNs~\cite{weiss2019}.
Besides, Okudono et al. proposed another weighted extension of $L^*$ algorithm to extract WFA from real-value-output RNNs~\cite{ok2020}.

The pedagogical approaches have achieved great success in abstracting RNNs for small-scale languages,
particularly formal languages. 
Such exact learning approaches have intrinsic limitation in 
the scalability of the language complexity,
hence they are not suitable for automata extraction from natural language processing models.

\paragraph*{Compositional Approach.}
Another technical line
is the compositional approach, 
which generally leverages
unsupervised algorithms (e.g. k-means, GMM) to cluster state vectors as abstract states~\cite{zeng,cechin}.
Wang et al. studied the key factors that
influence the reliability of extraction process, and proposed an empirical rule to extract DFA from RNNs~\cite{wang2018}.
Later, Zhang et al. followed the state encoding of compositional approach and proposed a WFA extraction approach
from RNNs~\cite{zhang2021},
which can be applied to both grammatical languages and natural languages.
In this paper, our proposal of extracting WFA from RNNs also falls into the line of compositional approach,
but aims at proposing transition rule extraction method to address the transition sparsity problem 
and enhance the context-aware ability.

Recently, many of the verification, analysis and repairment works also leverage similar approaches to abstract RNNs as a more explicit model,
such as \textit{DeepSteller}~\cite{du2019}, \textit{Marble}~\cite{du2020} and \textit{RNNsRepair}~\cite{xie2021}.
These works achieve great progress in analyzing and repairing RNNs, but have strict requirements of scalability to large-scale tasks,
particularly natural language processing.
The proposed approach, which 
demonstrates
better precision and scalability, shows great potential for further applications such as RNN analysis and Network repairment.
We consider applying our method to 
RNN analysis as future work.


\section{Conclusion}
\label{sec:conclusion}
This paper presents a novel 
approach 
to extracting transition rules of
weighted finite automata from recurrent neural networks.
We measure the distance between abstract states and complement the transition rules of \textit{missing rows}.
In addition, we present an \zm{heuristic}
method 
to enhance the context-aware ability of the extracted WFA.
We further propose two augmentation tactics to track more transition behaviours of RNNs.
Experiments on two natural language datasets show that the WFA extracted with our approach
achieve better consistency with target RNNs.
The theoretical estimation of computation complexity and experimental results demonstrate
that our rule extraction approach
can be applied to 
natural language datasets and complete the extraction procedure efficiently for large-scale tasks.

%
%
\subsubsection{Acknowledgements}
This research was sponsored by the National Natural Science Foundation of China under Grant No. 62172019, 61772038,
and CCF-Huawei Formal Verification Innovation Research Plan.

%
%
%
\bibliographystyle{splncs04}
\bibliography{reference.bib}

\begin{thebibliography}{10}
\providecommand{\url}[1]{\texttt{#1}}
\providecommand{\urlprefix}{URL }
\providecommand{\doi}[1]{https://doi.org/#1}

\bibitem{abdel2014convolutional}
Abdel-Hamid, O., Mohamed, A.r., Jiang, H., Deng, L., Penn, G., Yu, D.:
  Convolutional neural networks for speech recognition. IEEE/ACM Transactions
  on audio, speech, and language processing  \textbf{22}(10),  1533--1545
  (2014)

\bibitem{angluin}
Angluin, D.: Learning regular sets from queries and counterexamples.
  Information and computation  \textbf{75}(2),  87--106 (1987)

\bibitem{modelChecking}
Baier, C., Katoen, J.P.: Principles of model checking. MIT press (2008)

\bibitem{cechin}
Cechin, A.L., Regina, D., Simon, P., Stertz, K.: State automata extraction from
  recurrent neural nets using k-means and fuzzy clustering. In: 23rd
  International Conference of the Chilean Computer Science Society, 2003. SCCC
  2003. Proceedings. pp. 73--78. IEEE (2003)

\bibitem{che2018recurrent}
Che, Z., Purushotham, S., Cho, K., Sontag, D., Liu, Y.: Recurrent neural
  networks for multivariate time series with missing values. Scientific reports
   \textbf{8}(1),  1--12 (2018)

\bibitem{datta2020neural}
Datta, D., David, P.E., Mittal, D., Jain, A.: Neural machine translation using
  recurrent neural network. International Journal of Engineering and Advanced
  Technology  \textbf{9}(4),  1395--1400 (2020)

\bibitem{dong2020}
Dong, G., Wang, J., Sun, J., Zhang, Y., Wang, X., Dai, T., Dong, J.S., Wang,
  X.: Towards interpreting recurrent neural networks through probabilistic
  abstraction. In: 2020 35th IEEE/ACM International Conference on Automated
  Software Engineering (ASE). pp. 499--510. IEEE (2020)

\bibitem{du2020}
Du, X., Li, Y., Xie, X., Ma, L., Liu, Y., Zhao, J.: Marble: Model-based
  robustness analysis of stateful deep learning systems. In: Proceedings of the
  35th IEEE/ACM International Conference on Automated Software Engineering. pp.
  423--435 (2020)

\bibitem{du2019}
Du, X., Xie, X., Li, Y., Ma, L., Liu, Y., Zhao, J.: Deepstellar: Model-based
  quantitative analysis of stateful deep learning systems. In: Proceedings of
  the 2019 27th ACM Joint Meeting on European Software Engineering Conference
  and Symposium on the Foundations of Software Engineering. pp. 477--487 (2019)

\bibitem{logic}
Gastin, P., Monmege, B.: A unifying survey on weighted logics and weighted
  automata. Soft Computing  \textbf{22}(4),  1047--1065 (2018)

\bibitem{goldberg2017neural}
Goldberg, Y.: Neural network methods for natural language processing. Synthesis
  lectures on human language technologies  \textbf{10}(1),  1--309 (2017)

\bibitem{He_2016_CVPR}
He, K., Zhang, X., Ren, S., Sun, J.: Deep residual learning for image
  recognition. In: Proceedings of the IEEE Conference on Computer Vision and
  Pattern Recognition (CVPR) (June 2016)

\bibitem{lstm}
Hochreiter, S., Schmidhuber, J.: Long short-term memory. Neural computation
  \textbf{9}(8),  1735--1780 (1997)

\bibitem{jacobsson2005}
Jacobsson, H.: Rule extraction from recurrent neural networks: Ataxonomy and
  review. Neural Computation  \textbf{17}(6),  1223--1263 (2005)

\bibitem{toxic}
Jigsaw: Toxic comment classification challenge,
  \url{https://www.kaggle.com/c/jigsaw-toxic-comment-classification-challenge}
  Accessed April 16, 2022

\bibitem{qc}
Li, X., Roth, D.: Learning question classifiers. In: COLING 2002: The 19th
  International Conference on Computational Linguistics (2002)

\bibitem{ok2020}
Okudono, T., Waga, M., Sekiyama, T., Hasuo, I.: Weighted automata extraction
  from recurrent neural networks via regression on state spaces. In:
  Proceedings of the AAAI Conference on Artificial Intelligence. vol.~34, pp.
  5306--5314 (2020)

\bibitem{omlin1996}
Omlin, C.W., Giles, C.L.: Extraction of rules from discrete-time recurrent
  neural networks. Neural networks  \textbf{9}(1),  41--52 (1996)

\bibitem{omlin1996b}
Omlin, C.W., Giles, C.L.: Rule revision with recurrent neural networks. IEEE
  Transactions on Knowledge and Data Engineering  \textbf{8}(1),  183--188
  (1996)

\bibitem{omlin1992}
Omlin, C., Giles, C., Miller, C.: Heuristics for the extraction of rules from
  discrete-time recurrent neural networks. In: [Proceedings 1992] IJCNN
  International Joint Conference on Neural Networks. vol.~1, pp. 33--38. IEEE
  (1992)

\bibitem{powers1998applications}
Powers, D.M.: Applications and explanations of zipf’s law. In: New methods in
  language processing and computational natural language learning (1998)

\bibitem{wang2018verification}
Wang, Q., Zhang, K., Liu, X., Giles, C.L.: Verification of recurrent neural
  networks through rule extraction. arXiv preprint arXiv:1811.06029  (2018)

\bibitem{wang2018}
Wang, Q., Zhang, K., Ororbia~II, A.G., Xing, X., Liu, X., Giles, C.L.: An
  empirical evaluation of rule extraction from recurrent neural networks.
  Neural computation  \textbf{30}(9),  2568--2591 (2018)

\bibitem{wang2019convolutional}
Wang, R., Li, Z., Cao, J., Chen, T., Wang, L.: Convolutional recurrent neural
  networks for text classification. In: 2019 International Joint Conference on
  Neural Networks (IJCNN). pp.~1--6. IEEE (2019)

\bibitem{weiss2018}
Weiss, G., Goldberg, Y., Yahav, E.: Extracting automata from recurrent neural
  networks using queries and counterexamples. In: International Conference on
  Machine Learning. pp. 5247--5256. PMLR (2018)

\bibitem{weiss2019}
Weiss, G., Goldberg, Y., Yahav, E.: Learning deterministic weighted automata
  with queries and counterexamples. In: Wallach, H., Larochelle, H.,
  Beygelzimer, A., d\textquotesingle Alch\'{e}-Buc, F., Fox, E., Garnett, R.
  (eds.) Advances in Neural Information Processing Systems. vol.~32. Curran
  Associates, Inc. (2019),
  \url{https://proceedings.neurips.cc/paper/2019/file/d3f93e7766e8e1b7ef66dfdd9a8be93b-Paper.pdf}

\bibitem{xie2021}
Xie, X., Guo, W., Ma, L., Le, W., Wang, J., Zhou, L., Liu, Y., Xing, X.:
  Rnnrepair: Automatic rnn repair via model-based analysis. In: International
  Conference on Machine Learning. pp. 11383--11392. PMLR (2021)

\bibitem{zeng}
Zeng, Z., Goodman, R.M., Smyth, P.: Learning finite state machines with
  self-clustering recurrent networks. Neural Computation  \textbf{5}(6),
  976--990 (1993)

\bibitem{zhang2021}
Zhang, X., Du, X., Xie, X., Ma, L., Liu, Y., Sun, M.: Decision-guided weighted
  automata extraction from recurrent neural networks. In: Thirty-Fifth AAAI
  Conference on Artificial Intelligence (AAAI). pp. 11699--11707. AAAI Press
  (2021)

\end{thebibliography}

\end{document}